\theoremstyle{thmstyleone}%
\newtheorem{theorem}{Theorem}
\theoremstyle{thmstyletwo}%
\theoremstyle{thmstylethree}%
\title{Quantifying Deep Learning Model Uncertainty in Conformal Prediction}
\author {
    Hamed Karimi\textsuperscript{\rm 1},
    Reza Samavi\textsuperscript{\rm 1,2}
}
\begin{document}

\maketitle

\begin{abstract}
Precise estimation of predictive uncertainty in deep neural networks is a critical requirement for reliable decision-making in machine learning and statistical modeling, particularly in the context of medical AI. Conformal Prediction (CP) has emerged as a promising framework for representing the model uncertainty by providing well-calibrated confidence levels for individual predictions. However, the quantification of model uncertainty in conformal prediction remains an active research area,
yet to be fully addressed. In this paper, we explore state-of-the-art CP methodologies and their theoretical foundations. We propose a probabilistic approach in quantifying the model uncertainty derived from the produced prediction sets in conformal prediction and provide certified boundaries for the computed uncertainty. By doing so, we allow model uncertainty measured by CP to be compared by other uncertainty quantification methods such as Bayesian (e.g., MC-Dropout and DeepEnsemble) and Evidential approaches.  

\end{abstract}

\section{Introduction}
\label{intro}
Accurate estimation of predictive uncertainty plays a crucial role in high-stakes real-world applications, particularly in the field of medical AI, where precise and reliable classification of diseases and conditions is paramount. Machine learning models have demonstrated their potential in aiding medical professionals with accurate diagnosis and treatment decisions. However, relying solely on point predictions without considering the associated uncertainty can lead to erroneous conclusions and suboptimal patient care.

To illustrate the significance of model uncertainty quantification in medical AI, let us consider a classification task involving the identification of different types of skin lesions based on diagnostic images. The machine learning model is trained on a large dataset of annotated skin lesion images, along with corresponding clinical information. The goal of the model is to classify new, unseen images into specific categories, such as malignant melanoma, benign nevi, or basal cell carcinoma.
Suppose the model predicts a given skin lesion as malignant melanoma, indicating a high probability of malignancy. Without an understanding of the associated uncertainty, medical professionals may proceed with aggressive treatment or surgical intervention, potentially subjecting patients to unnecessary procedures. There are inherent uncertainties in the prediction, stemming from various sources, such as variations in imaging quality, complex morphological features, or overlapping characteristics between different lesion types.
Therefore, for a prediction to be acceptable, in addition to the ability to achieve high predictive accuracy, it is also crucial to have a measure of the predictive uncertainty. 

Although there are popular approaches to quantify the model predictive uncertainty, e.g. Bayesian methods such as MC-Dropout~\cite{dropout} and DeepEnsemble~\cite{ensemble}, and Evidential approaches~\cite{evid1,evid2,evid_risk}, the lack of formal guarantees is a major limitation in the state-of-the-art methods of uncertainty quantification. To resolve this issue, Conformal Prediction (CP) or Conformal Inference~\citep{cp,inductive_cp} provides a compelling framework as a post-processing technique to address this challenge by offering a reliable indicator of uncertainty. 
Rather than providing a single deterministic prediction, CP constructs a finite \textit{Prediction Set} or \textit{Uncertainty Set} that encompasses a plausible subset of class labels for a given unseen input data point in any pretrained classifier. This prediction set reflects the inherent uncertainty associated with the model's predictions. In addition to a point estimation of the most likely predictive probability as a measure of confidence, the size of a prediction set is considered as an indicator of the model uncertainty in classifying a new data point. Larger prediction sets indicate higher model uncertainty associated with the input data. However, in case of using the prediction set size as an uncertainty indicator, the measure is not scaled to be compared with other state-of-the-art uncertainty quantification methods.

Returning to our medical AI example, instead of a definite prediction of malignant melanoma (a single true label), CP offers a prediction set indicating the most likely class labels of the skin lesion with the respected probabilities, while providing a guarantee that the true label is a member of this set with a high probability. This additional information enables medical professionals to make more informed decisions, considering the potential risks and uncertainties associated with the model's predictions. CP is also fast, computationally efficient, and generally applicable to every dataset (arbitrary data distribution) and classification model~\citep{distribution_cp}.
Nevertheless, the quantification of model uncertainty in conformal prediction for classification tasks remains an active research area, with several challenges and limitations. To the best of our knowledge, there is no scaled and reliable quantification of model uncertainty achieved by CP method. The uncertainty quantification is highly crucial when there is an intent to represent the amount of model uncertainty or perform comparative evaluations. Thus, we aim to propose a novel approach to quantify the model uncertainty based on the produced prediction sets and improve the reliability and accuracy of uncertainty estimation.

In this paper, we are making the following two contributions in the context of conformal prediction: (1) we investigate existing methodologies for model uncertainty estimation within conformal prediction for classification tasks, and analyze their strengths and limitations, (2) we propose a novel technique for uncertainty quantification of CP, aiming to facilitate the comparative evaluations between CP-based methods and other state-of-the-art uncertainty estimation methods. 
We use the formal guarantee of true label coverage~\cite{cp_aps} in the prediction set to devise a solid probabilistic theory along with certified boundaries of the model uncertainty. The proposed quantification method can enhance the accuracy and reliability of uncertainty estimation in real-world applications. By advancing the field of model uncertainty quantification in conformal prediction for classification tasks, this research endeavors to equip medical professionals with a more reliable and guaranteed method to quantify the model uncertainty and support their decision-making process.

\section{Background on Conformal Prediction}
\label{cp_background}
The validity of CP method depends on the assumption that the calibration data points are iid and exchangeable, i.e. data points are selected independently from the same distribution. In conformal procedure for a pretrained model, we use \textit{split conformal prediction} as a method of splitting the test (unseen) data into calibration data and validation data. The set of calibration data or holdout set is a small amount of additional unseen data with a size of around 500 to 1000 data points. Moreover, an arbitrary conformal score function is defined to represent a measure of discrepancy between model predictive outcomes and true labels which is used to compare an unseen data point in the validation set with those in the calibration set. CP uses the calibration data and the conformal score function to generate the prediction sets. Note that CP is not restricted to a specific conformal score function and classification/regression tasks. In the following, we discuss how to apply conformal prediction to a pretrained model.

In conformal prediction, we generally take any heuristic notion of model uncertainty associated with the input data point in any data distribution and any model, and transform this uncertainty to a rigorous form. To this aim, we generally consider an unseen data point $x \in \mathcal{X}_{cal}$ 
from a small set of i.i.d. (independent and identically distributed) data as calibration set, and the corresponding model output $y \in\mathcal{Y}=\{1,2,\cdots,K\}$. To construct the prediction set as a rigorous uncertainty estimation, we require to:
\begin{enumerate}
    \item Identify a heuristic notion of uncertainty using the pretrained model.
    \item Devise a conformal score function $s(x,y) \in \mathbb{R}$. Larger value of the score function indicates higher model uncertainty.
    \item Compute $\widehat{q}$ as the $\frac{\lceil(n+1)(1-\delta)\rceil}{n}$ quantile of the calibration scores $\{s_i=(x_i,y_i)\}_{i=1}^{n}$ using the coverage error level $\delta$ and unseen calibration data with the size $n$.
    \item Use this quantile $\widehat{q}$ to form the prediction sets for testing data $x_{val}\in\mathcal{X}_{val}$ as,
    \begin{equation}
    \mathcal{C}(x_{val},\widehat{q}) = \{y: s(x_{val}, y) \leq \widehat{q}\}\ .
    \label{cp_set}
    \end{equation}
\end{enumerate}
The prediction set $\mathcal{C}(x_{val},\widehat{q})$ is a subset of all possible $K$ labels that the model finds plausible for the image $x_{val}$.
The prediction set contains a relatively small number of labels and is guaranteed to include the true label with a user-specified probability (e.g., 90\%) in a relatively small prediction set of labels~\citep{distribution_cp}. This inclusion probability is expressed as an arbitrary coverage error level $\delta$ with which we set the probability that the prediction set contains the true label, to $1-\delta$. This validity attribute is called \textit{marginal coverage}, since the probability is averaged over the stochasticity in the unseen data points. CP satisfies the true label coverage property as its validity criterion~\citep{cp2} which will be formally discussed in Theorem~\ref{cp_coverage_theorem}.


According to split conformal prediction method, we formally have $\mathcal{X}_{cal}=\{(x_i,y_i)\}_{i=1}^n$ as a small set of i.i.d. and unseen calibration data,
in which $x_i \in \mathbb{R}^d$ is a feature vector of size $d$ as $i$th input data point, and $y_i \in \mathcal{Y}=\{1,2,\cdots,K\}$ is the corresponding true label out of $K$ possible target labels. Moreover, $(x_{val},y_{val})\in \mathcal{X}_{val}$ is a validation data point which is unseen during the training process. After computing the conformal scores associated with the calibration data, $\widehat{q}$ is obtained as the $1-\delta$ quantile of conformal scores in which $\delta$ is a user-specified error level of true label coverage. Consider $\mathcal{C}(x_{val},\widehat{q}): \mathbb{R}^d \times \mathbb{R} \rightarrow 2^{\mathcal{Y}}$ is a function that takes an unseen input data vector and $1-\delta$ quantile of their corresponding conformal scores, and then produces a prediction set containing a subset of possible labels.  
Assuming the data exchangeability, this prediction set is statistically certified to marginally cover the true label associated with a validation data point with the probability of at least $1-\delta$.

\begin{theorem}[Conformal Coverage Guarantee~\citep{cp,inductive_cp}]
Consider $\{(x_i,y_i) \in \mathcal{X}_{cal}\}_{i=1}^n$ and $(x_{val},y_{val}) \in \mathcal{X}_{val}$ are i.i.d. and unseen data as $n$ calibration data points and a validation data point, respectively. Let $\delta$ be the user-chosen coverage error level, $\widehat{q}$ is the $1-\delta$ quantile of calibration conformal scores, and $\mathcal{C}(x_{val},\widehat{q}) \subseteq \mathcal{Y}$ be the function of producing prediction set. If $\mathcal{C}(x_{val},\widehat{q})$ gradually grows to include all possible labels in $\mathcal{Y}$ when having large enough $\widehat{q}$, then, the probability of the true label being covered in the prediction set is guaranteed in the following bounds: 
\begin{equation}
    1-\delta \leq \mathcal{P}(y_{val} \in \mathcal{C}(x_{val},\widehat{q})) \leq 1-\delta+\frac{1}{n+1}\ .
\label{cp_coverage_eq}
\end{equation}
\label{cp_coverage_theorem}
\end{theorem}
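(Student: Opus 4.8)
The plan is to reduce the coverage event to an order-statistic statement about the conformal scores and then exploit exchangeability. First I would observe that, by the definition of the prediction set in~\eqref{cp_set}, the inclusion event $y_{val} \in \mathcal{C}(x_{val},\widehat{q})$ is \emph{exactly} the event $s(x_{val},y_{val}) \leq \widehat{q}$. Writing $s_i = s(x_i,y_i)$ for the calibration scores and $s_{val}=s(x_{val},y_{val})$ for the validation score, the monotonicity hypothesis (the prediction set grows to cover all of $\mathcal{Y}$ as $\widehat{q}$ increases) guarantees this equivalence is clean, so the quantity to bound becomes $\mathcal{P}(s_{val} \leq \widehat{q})$.

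Next I would make the threshold explicit. Since $\widehat{q}$ is the $\frac{\lceil (n+1)(1-\delta)\rceil}{n}$ empirical quantile of $\{s_1,\dots,s_n\}$, it equals the $k$-th smallest calibration score with $k=\lceil (n+1)(1-\delta)\rceil$. The event $s_{val}\leq \widehat{q}$ then says that $s_{val}$ falls at or below the $k$-th order statistic of the calibration scores, which (assuming the scores are almost surely distinct) is equivalent to the rank of $s_{val}$ among the full collection $\{s_1,\dots,s_n,s_{val}\}$ being at most $k$.

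The crux is exchangeability. Because the $n+1$ pairs are i.i.d.\ (hence exchangeable), the scores $s_1,\dots,s_n,s_{val}$ are exchangeable, so the rank of $s_{val}$ among them is uniformly distributed on $\{1,2,\dots,n+1\}$. Therefore
\[
\mathcal{P}(s_{val}\leq \widehat{q}) = \mathcal{P}\!\left(\mathrm{rank}(s_{val})\leq k\right) = \frac{k}{n+1} = \frac{\lceil (n+1)(1-\delta)\rceil}{n+1}.
\]
Finally I would sandwich this using the elementary ceiling bounds $(n+1)(1-\delta) \leq \lceil (n+1)(1-\delta)\rceil < (n+1)(1-\delta)+1$: dividing through by $n+1$ yields the lower bound $1-\delta$ and the upper bound $1-\delta+\frac{1}{n+1}$, which is precisely~\eqref{cp_coverage_eq}.

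I expect the main obstacle to be the handling of ties in the conformal scores, which is what separates the clean lower bound from the upper bound. The uniform-rank argument is exact only when the scores are distinct almost surely; in the presence of ties one must either invoke a randomized tie-breaking rule or argue that ties can only enlarge the prediction set, which preserves the lower bound while requiring more careful accounting to retain the $\frac{1}{n+1}$ slack in the upper bound. A secondary point of care is justifying the equivalence between the quantile threshold and the $k$-th order statistic, and confirming that the stated quantile level $\frac{\lceil(n+1)(1-\delta)\rceil}{n}$ indeed selects the $k$-th smallest calibration score under the conformal convention.
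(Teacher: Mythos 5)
Your proof is correct, and it is essentially the canonical argument: the paper itself does not prove Theorem~\ref{cp_coverage_theorem}, deferring instead to the cited references, and your reduction of the coverage event to the rank of $s_{val}$ among the $n+1$ exchangeable scores being uniform on $\{1,\dots,n+1\}$, followed by the ceiling bounds on $\lceil(n+1)(1-\delta)\rceil$, is exactly the proof given in those sources. Your closing caveat is also on target: the almost-sure distinctness (or randomized tie-breaking) needed for the upper bound is precisely the ``related conditions'' the paper alludes to when it points the reader to the references.
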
 
\noindent The proof and the related conditions of this theorem are available in~\citep{cp,distribution_cp}.

When constructing valid prediction sets, three distinct properties are required to be satisfied: (1) the \textit{marginal coverage property} of the true label that guarantees the prediction set includes the true label with the probability of at least $1-\delta$ based on Equation~\ref{cp_coverage_eq}, (2) the \textit{set size property} to reflect the desirability of a smaller size for the  prediction set, and (3) the \textit{adaptivity property} that necessitates the set size for unseen data is modified to represent instance-wise model uncertainty, i.e., the set size is smaller when the model encounters easier test data rather than the inherently harder ones. Note that the difficulty of a test data point is based on the rank of its true label in the sorted set of outcome probabilities. These properties affect each other; for example, the set size property tries to make the sets smaller, while the adaptivity property tries to make the sets larger for harder data points when the model is uncertain, or choosing the fixed-size sets may satisfy the coverage property, but without adaptivity.

As an example, to construct the prediction sets, we require to have a pretrained model $\mathcal{M}_{\Theta}$ with the parameter set $\Theta$ accompanied by its heuristic notion of uncertainty to form an arbitrary conformal score function, e.g., one minus the softmax probability $\mathcal{M}_{\Theta}(x,y_{true})$ associated with true label $y_{true}$ given the input data point $x$.  
However, the softmax probabilities are unreliable due to being overconfident or underconfident~\cite{calib1, calib2}. Thus, split conformal prediction method offers using a small calibration set of unseen data (not seen during the training process) to apply conformal score function and statistically achieve coverage guarantee. 
For the calibration data, we compute the aforementioned conformal scores which is higher when the model is more uncertain in the prediction. Then, we compute $\widehat{q}$ as $1-\delta$ quantile of the calibration conformal scores. For instance, if $\delta=0.1$ is set for calibration data, at least $90\%$ of softmax probabilities associated with the true labels are certified to be above the $1-\widehat{q}$. Eventually, for each validation data point $x_{val} \in \mathcal{X}_{val}$ (unseen testing data points), we include all the labels with the softmax probability above $1-\widehat{q}$ into the prediction set $\mathcal{C}(x_{val},\widehat{q})$ as,
\begin{equation}
\mathcal{C}(x_{val},\widehat{q}) = \{y: \mathcal{M}_{\Theta}(x_{val}) \geq 1-\widehat{q}\}\ .
\label{cp_set_example}
\end{equation}
Therefore, the softmax probability associated with the true label is statistically certified to be above $1-\widehat{q}$ with the probability of $90\%$, so that the marginal true label coverage is guaranteed based on Equation~\ref{cp_coverage_eq} in Theorem~\ref{cp_coverage_theorem}.

Considering the size of the prediction set as the only uncertainty measure is not reliable due to the probabilistic nature of conformal method, i.e., the existence of the true label in the prediction sets is stochastic w.r.t. the coverage error level $\delta$. 
In the following section, 
we will propose a new quantification approach for the model uncertainty which considers the probabilistic existence of true labels.

\section{Related Work}
\label{related_work}
A naive approach to generate prediction sets for test data is to use a score function, e.g., softmax function, and include labels from the most likely to the least likely probabilities until their cumulative summation exceeds the threshold $1-\delta$. In this approach, the true label coverage cannot be guaranteed since the output probabilities are overconfident and uncalibrated~\citep{calib2}. Furthermore, the lower probabilities in image classifiers are significantly miscalibrated which gives rise to larger prediction sets that may misrepresent the model uncertainty. There are also a few methods to generate prediction sets, but not based on conformal prediction~\citep{non_cp1,non_cp2}. However, these methods do not have finite marginal coverage guarantees as described in Theorem~\ref{cp_coverage_theorem}.

The coverage guarantee can be achieved using a new threshold and calibration data samples as holdout set. In this regard, Romano et al.~\citep{cp_aps} proposed a method to make CP more stable in the presence of noisy small probability estimates in image classification. The authors developed a conformal method called \textit{Adaptive Prediction Set (APS)} to provide marginal coverage of true label in the prediction set which is also fully adaptive to complex data distributions using a novel conformity score, particularly for classification tasks. For example, with $\delta=0.1$, if selecting prediction sets that contain $0.85$ estimated probability can achieve $90\%$ coverage on the calibration data, APS will utilize the threshold $0.85$ to include labels in the prediction sets. However, APS still produces large prediction sets which cannot precisely represent the model uncertainty.

To mitigate the large set size, the authors in~\citep{cp_raps} introduced a regularization technique called Regularized Adaptive Prediction Sets (RAPS) to relax the impact of the noisy probability estimates which yield to significantly smaller and more stable prediction sets. RAPS modifies APS algorithm by penalizing the small conformity scores associated with the unlikely labels after Platt scaling~\citep{platt}. RAPS regularizes the APS method, therefore, RAPS acts exactly as same as APS when setting the regularization parameter to 0. Both APS and RAPS methods are always certified to satisfy the marginal coverage in Equation~\ref{cp_coverage_eq} regardless of model, architecture, and dataset. 
Both methods also require negligible computational complexity in both finding the appropriate threshold using the calibration data with the size of $n\approx 1000$ and inference phase. However, RAPS could outperform the state-of-the-art APS by achieving marginal coverage of true labels with significantly smaller prediction sets. Thus, RAPS can produce adaptive but smaller prediction sets as an estimation of the model uncertainty given unseen image data samples.

\section{Uncertainty Quantification in Prediction Sets}
\label{cp_uncer}

Following Theorem~\ref{cp_coverage_theorem}, consider $\delta$ as the error level of true label coverage, $\widehat{q}$ as the computed $1-\delta$ quantile of conformal scores over calibration data with size $n$, and $\mathcal{C}(x_{val},\widehat{q}):\mathbb{R}^d \times\mathbb{R}\rightarrow2^{\mathcal{Y}}$ as the prediction set function given the unseen validation data point $(x_{val},y_{val}) \in \mathcal{X}_{val}$. The result of the function is a prediction set associated with $x_{val}$ with the size $|\mathcal{C}(x_{val},\widehat{q})|=m \geq 0$ and the maximum size of the number of all possible target labels, i.e., $m \leq |\mathcal{Y}|=K$. The true label $y_{val}$ is included in $\mathcal{C}(x_{val},\widehat{q})$ with some probabilistic boundaries. Thus, the model uncertainty $U_{\mathcal{C}}(x_{val})$ associated with the validation data point $x_{val}$ based on the corresponding prediction set $\mathcal{C}(x_{val},\widehat{q})$ can be quantified based on the following theorem:

\begin{theorem}[Conformal Uncertainty Quantification]
Suppose an unseen validation data point $(x_{val},y_{val}) \in \mathcal{X}_{val}$ is fed to a pretrained classifier with $K$ possible target labels. Let $\delta$ be the coverage error level of the true label $y_{val}$, and $\mathcal{C}(x_{val})$ be the corresponding prediction set of size $m \in \mathbb{Z}^{[0,K]}$ achieved by $1-\delta$ quantile of calibration data with size $n$. Then, the conformal model uncertainty $U_{\mathcal{C}}(x_{val})$ associated with $x_{val}$ is quantified to be $0 \leq U_{\mathcal{C}}(x_{val}) \leq 1$, and guaranteed in the following marginal lower bound $\mathcal{L}_{\mathcal{C}}$ and upper bound $\mathcal{H}_{\mathcal{C}}$ as:
\begin{itemize}
\item if $m=0$, then:
\begin{equation}
    U_{\mathcal{C}}(x_{val}) = 1\ ,
\end{equation}
\item and if $0<m \leq K$, then:
\begin{align}
\begin{split}
    &U_{\mathcal{C}}(x_{val}) \geq \widehat{u}_{\mathcal{C}}(1-\delta)+\delta-\frac{1}{n+1}=\mathcal{L}_{\mathcal{C}} \quad \text{and}\\ 
    &U_{\mathcal{C}}(x_{val}) \leq \min(\mathcal{H}_{\mathcal{C}},1) \\
    &\text{s.t.} \quad \mathcal{H}_{\mathcal{C}} = \widehat{u}_{\mathcal{C}}(\frac{n+2}{n+1})+\delta(1-\widehat{u}_{\mathcal{C}})\ ,
\end{split}
\end{align}
\end{itemize}
where $\widehat{u}_{\mathcal{C}}$ is Pure Model Uncertainty and computed as,
\begin{equation}
    \widehat{u}_{\mathcal{C}} = \frac{m+\delta-1}{K}\ .
\end{equation}
\label{cp_uncer_theorem}
\end{theorem}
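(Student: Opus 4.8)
The plan is to define the conformal uncertainty $U_{\mathcal{C}}(x_{val})$ through a law-of-total-probability decomposition over the coverage event, and then to propagate the two-sided bound of Theorem~\ref{cp_coverage_theorem} through that expression. First I would dispose of the degenerate case $m=0$: an empty prediction set means the classifier finds no label plausible at the chosen error level, so there is nothing to hedge among and the uncertainty is declared maximal, $U_{\mathcal{C}}(x_{val})=1$. This case must be separated out, because the pure-uncertainty quantity $\widehat{u}_{\mathcal{C}}=\frac{m+\delta-1}{K}$ is negative when $m=0$, so the mixture formula below does not apply to it.

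For $0<m\le K$ I would first record that $\widehat{u}_{\mathcal{C}}=\frac{m+\delta-1}{K}$ is a normalized set-size measure lying in $[0,1]$: since $m\ge 1$ and $0<\delta<1$ we have $m+\delta-1\ge\delta>0$, and since $m\le K$ and $\delta\le 1$ we have $m+\delta-1\le K$, which is a one-line check. Writing $A=\{y_{val}\in\mathcal{C}(x_{val},\widehat{q})\}$ and $p=\mathcal{P}(A)$, the key modeling step is to condition on whether the true label is captured: on $A$ the residual uncertainty is the pure set-size uncertainty $\widehat{u}_{\mathcal{C}}$, while on the complement $A^{c}$ the set has failed to contain the truth and the uncertainty is maximal, equal to $1$. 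Total probability then yields the working definition
\[
    U_{\mathcal{C}}(x_{val}) = \widehat{u}_{\mathcal{C}}\,p + (1-p),
\]
which is a convex combination of $\widehat{u}_{\mathcal{C}}\in[0,1]$ and $1$, so the claimed range $0\le U_{\mathcal{C}}(x_{val})\le 1$ is immediate.

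The bounds then follow by substituting the coverage interval $1-\delta\le p\le 1-\delta+\frac{1}{n+1}$ from Theorem~\ref{cp_coverage_theorem} into the two occurrences of $p$. For the lower bound I would bound the term $\widehat{u}_{\mathcal{C}}\,p$ below using $p\ge 1-\delta$ and the term $1-p$ below using $p\le 1-\delta+\frac{1}{n+1}$, collecting $\mathcal{L}_{\mathcal{C}}=\widehat{u}_{\mathcal{C}}(1-\delta)+\delta-\frac{1}{n+1}$; for the upper bound I would take the opposite extremes, which turns the coefficient of $\widehat{u}_{\mathcal{C}}$ into $\frac{n+2}{n+1}=1+\frac{1}{n+1}$ and produces $\mathcal{H}_{\mathcal{C}}=\widehat{u}_{\mathcal{C}}\frac{n+2}{n+1}+\delta(1-\widehat{u}_{\mathcal{C}})$, after which I cap at $1$ because $U_{\mathcal{C}}\le 1$ always holds.

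The main obstacle is not the algebra but justifying the two-term substitution and the precise form of $\widehat{u}_{\mathcal{C}}$. Since $U_{\mathcal{C}}=1-p(1-\widehat{u}_{\mathcal{C}})$ is monotone in $p$, the sharpest certificate would evaluate both terms at a common endpoint of the coverage interval; bounding each occurrence of $p$ separately by its own extreme is therefore a valid but deliberately looser relaxation, and I would need to argue that this is the intended guaranteed bound rather than an error. I would also defend why the pure uncertainty carries the $\delta-1$ correction instead of the naive ratio $m/K$: this shift folds the coverage slack $\delta$ into the set-size measure, which is what keeps $\widehat{u}_{\mathcal{C}}$ well-behaved at the endpoints and keeps $U_{\mathcal{C}}$ inside $[0,1]$ across the whole range $1\le m\le K$.
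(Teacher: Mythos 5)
Your proposal is correct and follows essentially the same route as the paper's own proof: the same law-of-total-probability decomposition over the coverage event with $\mathcal{P}(P_u \mid P_1)=\widehat{u}_{\mathcal{C}}$ and $\mathcal{P}(P_u \mid P_0)=1$, the same term-by-term substitution of the Theorem~\ref{cp_coverage_theorem} interval, and the same special-casing of $m=0$ and capping at $1$. Your added observation that substituting separate extremes for the two occurrences of $p$ yields a valid but non-tight certificate (the sharp bounds would use a common endpoint of the coverage interval, since $U_{\mathcal{C}}=1-p(1-\widehat{u}_{\mathcal{C}})$ is monotone in $p$) is accurate and is a point the paper itself does not acknowledge.
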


\begin{proof}
     In CP, the size of the prediction set (model's outcome) is an indicator of the total model uncertainty denoted by $u_{\mathcal{C}}$ which grows by increasing the size of the prediction set. The size of the prediction set denoted by $m$ is an integer restricted between 0 and $K$, i.e., $0\leq m \leq K$. Thus, $u_{\mathcal{C}}$ is defined as a probability over the size of the produced prediction set and computed as, 
    \begin{equation}
    u_{\mathcal{C}} = \frac{m}{K}\ ,    
    \end{equation}
     where $K$ is the number of all possible target labels. Now, we define \emph{pure model uncertainty} as our baseline uncertainty by subtracting the probability of the only certain and desired case from the total model uncertainty $u_{\mathcal{C}}$ that is when the model produces a prediction set containing only one class label (out of $K$ possible labels) which is the true label with the probability of at least $1-\delta$ according to Theorem~\ref{cp_coverage_theorem}. We compute the pure model uncertainty denoted by $\widehat{u}_{\mathcal{C}}$ as,
    \begin{equation}
    \widehat{u}_{\mathcal{C}} = u_{\mathcal{C}} - \frac{1}{K}(1-\delta) = \frac{m+\delta-1}{K}\ ,
    \label{uncer_base}
    \end{equation}
    where $\delta$ is the coverage error level of the true label. Then, we have our heuristic notion of uncertainty as the pure model uncertainty $\widehat{u}_{\mathcal{C}} \in \mathbb{R}^{[0,1]}$ which is associated with the produced prediction set $\mathcal{C}(x_{val})$ given $x_{val}$, and scaled to be used as a baseline uncertainty to quantify the conformal model uncertainty. Note that this heuristic notion of uncertainty is arbitrarily devised and can be replaced by any other heuristic and reasonable uncertainty quantification as a measure of baseline model uncertainty.
    
    Theorem~\ref{cp_uncer_theorem} has two distinct cases with respect to $m$ as the size of prediction set $\mathcal{C}(x_{val})$: 
    If $m=0$, the model is fully uncertain that could not select any target label to include into the prediction set based on the computed $\widehat{q}$. Thus, although in this case, $\widehat{u}_{\mathcal{C}} = \frac{\delta-1}{K} \leq 0$, this zero-size prediction set is treated as a special case and interpreted as the maximum model uncertainty which yields to $U_{\mathcal{C}}(x_{val}) = 1$.
    
    In the general case of $0 < m \leq K$, we have two distinct probabilistic events, the true label is either included in the prediction set denoted by $P_1$, i.e., $P_1: y_{val} \in \mathcal{C}(x_{val})$, or not included in the prediction set denoted by $P_0$, i.e., $P_0: y_{val} \notin \mathcal{C}(x_{val})$. These two inclusion events $P_1$ and $P_0$ are mutually exclusive, i.e., disjoint events, so that only one of the events can happen at the same time. We can compute the model uncertainty as the probability of the model being uncertain denoted by $P_{u}$ when either $P_1$ or $P_0$ holds as,
    \begin{align}
    \begin{split}
        U_{\mathcal{C}}(x_{val}) &= \mathcal{P}(P_{u} \land (P_1 \lor P_0)) \\
        &= \mathcal{P}((P_{u} \land P_1) \lor (P_{u} \land P_0))\ .
    \end{split}
    \end{align}
    As the two events $P_1$ and $P_0$ are disjoint, their corresponding joint events $P_{u} \land P_1$ and $P_{u} \land P_0$ are also mutually exclusive. Therefore, we have:
    \begin{align}
    \begin{split}
        U_{\mathcal{C}}(x_{val}) &= \mathcal{P}((P_{u} \land P_1) \lor (P_{u} \land P_0)) \\
        &= \mathcal{P}(P_{u} \land P_1) + \mathcal{P}(P_{u} \land P_0)\ .
    \end{split}
    \end{align}
    Each joint event can be written based on its own conditional probability of the model being uncertain (i.e., $P_{u}$) given the inclusion of the true label in the prediction set (i.e., $P_1$ or $P_0$) as,
    \begin{align}
    \begin{split}
    \mathcal{P}(P_{u} \land P_1) &= \mathcal{P}(P_{u} | P_1).\mathcal{P}(P_1) \quad \text{and} \\ 
    \mathcal{P}(P_{u} \land P_0) &= \mathcal{P}(P_{u} | P_0).\mathcal{P}(P_0)\ ,
    \end{split}
    \label{}
    \end{align}
    where $\mathcal{P}(P_0)=\delta$ and $\mathcal{P}(P_1)=1-\delta$ based on the user-specified $\delta$ as the error level of true label coverage in CP method. Then, the following equation holds:
    \begin{equation}
        U_{\mathcal{C}}(x_{val}) = \mathcal{P}(P_{u} | P_1).\mathcal{P}(P_1) + \mathcal{P}(P_{u} | P_0).\mathcal{P}(P_0)\ .
    \label{uncer_conditional}
    \end{equation}
    If $P_0$ holds, it means $y_{val} \notin \mathcal{C}(x_{val})$. The prediction set without the true label does not yield to an acceptable  predictive outcomes. In this case, we can consider the model to be fully uncertain such that $\mathcal{P}(P_{u} | P_0)=1$. Otherwise, if $P_1$ holds, it means $y_{val} \in \mathcal{C}(x_{val})$. In this case, the true label is included in the prediction set and the pure model uncertainty $\widehat{u}_{\mathcal{C}}$ is an indicator of the baseline model uncertainty associated with the prediction set such that $\mathcal{P}(P_{u} | P_1) = \widehat{u}_{\mathcal{C}}$. We can now rewrite Equation~\ref{uncer_conditional} as,
    \begin{equation}
        U_{\mathcal{C}}(x_{val}) = \widehat{u}_{\mathcal{C}}.\mathcal{P}(P_1) + \mathcal{P}(P_0)\ .
    \label{uncer_prob}
    \end{equation}
    According to Theorem~\ref{cp_coverage_theorem}, the following upper and lower bounds hold for the probabilities $\mathcal{P}(P_1)$ and $\mathcal{P}(P_0)$ (i.e., $1-\mathcal{P}(P_1)$) to guarantee the true label coverage in the prediction set as,
    \begin{align}
        1-\delta \leq \mathcal{P}(P_1) &\leq 1-\delta+\frac{1}{n+1} \label{p1_bound} \quad \text{and} \\
        \delta-\frac{1}{n+1} &\leq \mathcal{P}(P_0) \leq \delta\  .
        \label{p0_bound}
    \end{align}
    Now, we can use the pure uncertainty $0 \leq \widehat{u}_{\mathcal{C}} \leq 1$, and the upper and lower bounds in Equations~\ref{p1_bound} and~\ref{p0_bound} to construct the bounds for the model uncertainty $U_{\mathcal{C}}(x_{val})$ based on Equation~\ref{uncer_prob} as,
    \begin{align}
        &\widehat{u}_{\mathcal{C}}.\mathcal{P}(P_1) + \mathcal{P}(P_0) \geq \widehat{u}_{\mathcal{C}}(1-\delta)+\delta-\frac{1}{n+1} \quad \text{and} \\ 
        &\widehat{u}_{\mathcal{C}}.\mathcal{P}(P_1) + \mathcal{P}(P_0) \leq \widehat{u}_{\mathcal{C}}(1-\delta+\frac{1}{n+1})+\delta\ .
    \end{align}
    Finally, we have:
    \begin{align}
    \begin{split}
        &U_{\mathcal{C}}(x_{val}) \geq \widehat{u}_{\mathcal{C}}(1-\delta)+\delta-\frac{1}{n+1}=\mathcal{L}_{\mathcal{C}} \quad \text{and} \\  
        &U_{\mathcal{C}}(x_{val}) \leq \widehat{u}_{\mathcal{C}}(\frac{n+2}{n+1})+\delta(1-\widehat{u}_{\mathcal{C}})=\mathcal{H}_{\mathcal{C}}\ ,
    \end{split}
    \label{case_2}
    \end{align}
    where $\widehat{u}_{\mathcal{C}}=\frac{m+\delta-1}{K}$, and $\mathcal{L}_{\mathcal{C}}$ and $\mathcal{H}_{\mathcal{C}}$ denote the conformal model uncertainty lower and upper bounds, respectively.

\begin{figure}[t] 
\centering
\subfloat[Possible class labels: $K=10$]{
    \includegraphics[width=\linewidth]{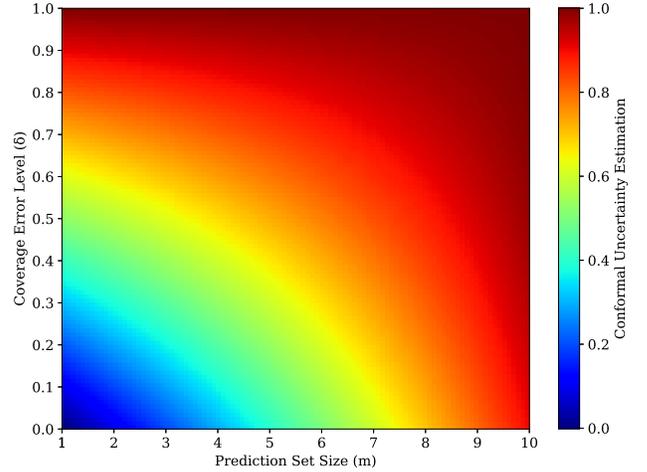}%
    \label{uncer_10_plot}
    }
\hfill
\subfloat[Possible class labels: $K=100$]{
    \includegraphics[width=\linewidth]{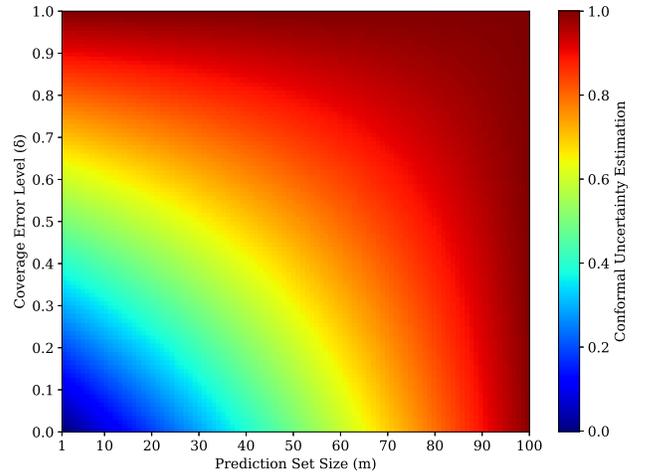}%
    \label{uncer_100_plot}
    }
\caption{The conformal model uncertainty associated with prediction sets of different sizes $m$ based on the variation of error level $\delta$}
\end{figure}

\begin{figure}[t] 
\centering
\subfloat[Possible class labels: $K=10$]{
    \includegraphics[width=\linewidth]{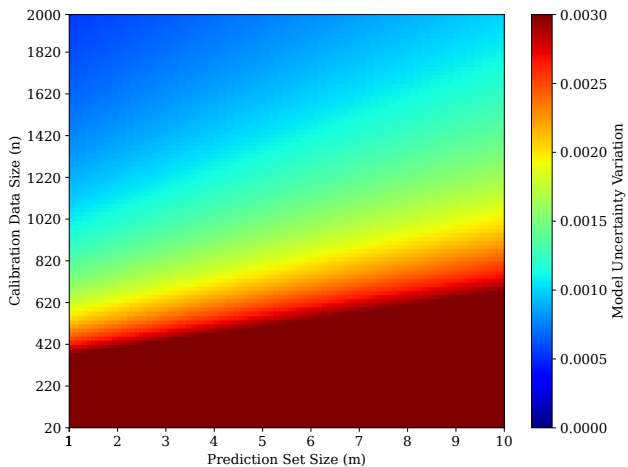}%
    \label{uncer_var_10_plot}
    }
\hfill
\subfloat[Possible class labels: $K=100$]{
    \includegraphics[width=\linewidth]{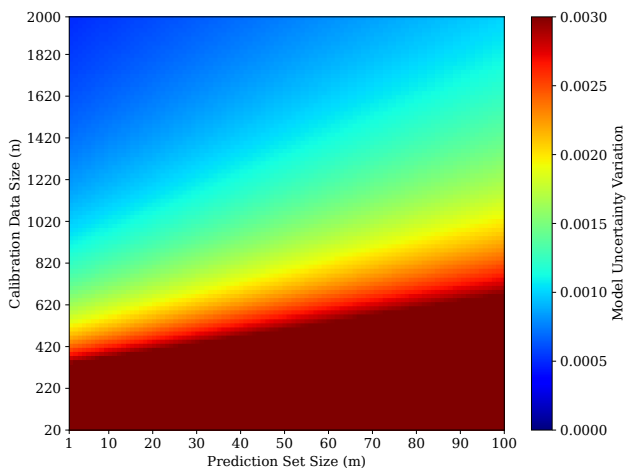}%
    \label{uncer_var_100_plot}
    }
\caption{The conformal model uncertainty variation $d_{\mathcal{C}}$ associated with prediction sets of different sizes $m$ based on the variation of calibration data size $n$ when $\delta$ is fixed}
\end{figure}

When $m=K$, we compute Equations~\ref{uncer_base} and~\ref{case_2} by setting $m$ to $K$, and now, we have all the possible labels in the prediction set representing that the model is highly uncertain and could not exclude any of the labels, i.e., could not make the set size smaller. Therefore, the upper bound $\mathcal{H}_{\mathcal{C}}$ of the conformal model uncertainty $U_{\mathcal{C}}(x_{val})$ is maximum and set to be $\min(\mathcal{H}_{\mathcal{C}},1)$.
\end{proof}

In the following section, we discuss the validity and interpretations of the proposed uncertainty quantification method in conformal prediction.

\subsection{Interpretation of the Uncertainty Quantification}
According to the proposed Theorem~\ref{cp_uncer_theorem}, we can quantify the model uncertainty from the produced prediction sets in CP. This quantification represents the amount of uncertainty that a conformal model encounters when producing a prediction set of classifying an unseen validation data point. Based on the proposed Theorem~\ref{cp_uncer_theorem}, the model uncertainty is highly affected by two different measures: (1) the size of the produced prediction set, so that the larger set size indicates the higher model uncertainty associated with an unseen data point, and (2) the error level $\delta$ of true label coverage, so that the higher error level $\delta$ gives rise to lower value of $1-\delta$ which represents a lower probability of true label inclusion in the prediction set. When true label is not included in the prediction set, the model is expected to be highly uncertain in the prediction, therefore, the model shows higher uncertainty as the probability of true label inclusion in the prediction set is decreased. 

Figures~\ref{uncer_10_plot} and~\ref{uncer_100_plot} indicate the trend of the model uncertainty quantified for different sizes of an arbitrary prediction set based on the variation of coverage error level $\delta$ with the number of possible labels $K=10$ and $K=100$, respectively. For any arbitrary number of possible class labels, e.g., $K=10$ or $K=100$, we can obviously observe that the quantified conformal model uncertainty is consistently increasing with the growth in both prediction set size $m$ and the error level $\delta$ of true label coverage. The conformal model uncertainty is increased when the size of the prediction set is increasing which is an indicator of higher uncertainty. Furthermore, by increasing the error level $\delta$, the probability of true label inclusion in the prediction set is decreased and the model should become more uncertain in the prediction. Note that when $m=1$ and $\delta=0$, the produced prediction set contains only one label which is definitely the true label; therefore, the model has the minimum uncertainty on its prediction, i.e., maximum predictive confidence, which is the desired outcome.

According to the proposed Theorem~\ref{cp_uncer_theorem}, we quantify the upper bound $\mathcal{H}_{\mathcal{C}}$ and the lower bound $\mathcal{L}_{\mathcal{C}}$ for the conformal model uncertainty $U_{\mathcal{C}}(x_{val})$. There is a certified interval of uncertainty variations in the proposed quantification method denoted by $d_{\mathcal{C}}$ that is caused by the upper $\mathcal{H}_{\mathcal{C}}$ and the lower $\mathcal{L}_{\mathcal{C}}$ bounds. We can compute the magnitude of the model uncertainty variation as,
\begin{equation}
    d_{\mathcal{C}}=|\mathcal{H}_{\mathcal{C}} - \mathcal{L}_{\mathcal{C}}|=\frac{1+\widehat{u}_{\mathcal{C}}}{n+1}\ ,
\label{uncer_var_eq}
\end{equation}
where $\widehat{u}_{\mathcal{C}}=\frac{m+\delta-1}{K}$ is computed as the pure model uncertainty in Equation~\ref{uncer_base}, $K$ denotes the number of possible labels, $m$ denotes the prediction set size, $n$ denotes the size of calibration data set, and $\delta$ denotes the coverage error level of the true label.
This magnitude of the uncertainty variations indicates the tightness and the accuracy of the proposed uncertainty estimation. Higher $d_{\mathcal{C}}$ represents a larger variation interval of the model uncertainty.  
Figures~\ref{uncer_var_10_plot} and~\ref{uncer_var_100_plot} demonstrate the amount of the uncertainty variation $d_{\mathcal{C}}$ based on the calibration set size $n$ and the prediction set size $m$ for the number of possible class labels $K=10$ and $K=100$, respectively. We can observe that for a fixed amount of calibration set size, when the prediction set size is increased, $d_{\mathcal{C}}$ as the magnitude of uncertainty variations is increased as well. This observation shows that a smaller prediction set yields to a tighter certified bound for the conformal model uncertainty which represents a more accurate estimation of uncertainty. Moreover, when the calibration set size is increased, the magnitude of uncertainty variation interval $d_{\mathcal{C}}$ is significantly decreased in order to provide a tighter bound of uncertainty estimation since by having larger set of calibration data, the model can compute more accurate $\widehat{q}$ as the $1-\delta$ quantile of the conformal scores in the calibration data.


\section{Conclusion}
\label{conc}
In this paper, we have addressed the problem of model uncertainty quantification in conformal prediction. Through our investigation, we proposed a novel technique to enhance the reliability and accuracy of uncertainty estimation. We used the existing statistical guarantee of the true label coverage in the prediction sets to quantify the model uncertainty in a probabilistic view, and certify upper and lower bounds for the uncertainty quantification. 
Our findings highlight the important implications of accurate uncertainty quantification, representing its benefits for decision-making and risk assessment in real-world applications. 

While our research has made notable contributions, there are still opportunities for further exploration. Future work should focus on addressing challenges such as high-dimensional data, imbalanced datasets, and incorporating domain knowledge into uncertainty quantification in conformal prediction. Additionally, investigating interpretability and explainability of uncertainty measures can provide actionable insights. We encourage continued research to foster the development of more reliable and accurate uncertainty quantification methods within the conformal prediction framework.





\bibliography{uncer_cp}

\end{document}